\newtheorem{theorem}{Theorem}[section]
\newtheorem{definition}{Definition}[section]
\title{How reparametrization trick broke differentially-private text representation learning}
\author{Ivan Habernal \\
	Trustworthy Human Language Technologies  \\
	Department of Computer Science \\
	Technical University of Darmstadt \\
	\texttt{ivan.habernal@tu-darmstadt.de} \\
	\url{www.trusthlt.org}  \\}
\begin{document}

\onecolumn
\noindent \textbf{How reparametrization trick broke differentially-private text representation learning}

\medskip
\noindent Ivan Habernal

\bigskip
This is a \textbf{pre-print non-final version} of the article accepted for publication at the \emph{60th Annual Meeting of the Association for Computational Linguistics (ACL 2022)}. The final official version will be published on the ACL Anthology website in May 2022: \url{https://aclanthology.org/}

\medskip
Please cite this pre-print version as follows.
\medskip

\begin{verbatim}
@InProceedings{Habernal.2022.ACL,
title = {How reparametrization trick broke differentially-private
text representation learning},
author = {Habernal, Ivan},
publisher = {Association for Computational Linguistics},
booktitle = {Proceedings of the 60th Annual Meeting of the
Association for Computational Linguistics},
pages = {(to appear)},
year = {2022},
address = {Dublin, Ireland},
url = {https://arxiv.org/abs/2202.12138},
}
\end{verbatim}
\twocolumn

\maketitle

\begin{abstract}
As privacy gains traction in the NLP community, researchers have started adopting various approaches to privacy-preserving methods. One of the favorite privacy frameworks, differential privacy (DP), is perhaps the most compelling thanks to its fundamental theoretical guarantees. Despite the apparent simplicity of the general concept of differential privacy, it seems non-trivial to get it right when applying it to NLP.
In this short paper, we formally analyze several recent NLP papers proposing text representation learning using DPText \citep{Beigi.et.al.2019.arXiv,Beigi.et.al.2019.HT,Alnasser.et.al.2021,Beigi.et.al.2021.Patent} and reveal their false claims of being differentially private.
Furthermore, we also show a simple yet general empirical sanity check to determine whether a given implementation of a DP mechanism almost certainly violates the privacy loss guarantees.
Our main goal is to raise awareness and help the community understand potential pitfalls of applying differential privacy to text representation learning.
\end{abstract}

\section{Introduction}

Differential privacy (DP), a formal mathematical treatment of privacy protection, is making its way to NLP \citep{Igamberdiev.Habernal.2021.arXiv,Senge.et.al.2021.arXiv}. Unlike other approaches to protect privacy of individuals' text documents, such as redacting named entities \citep{Lison.et.al.2021.ACL} or learning text representation with a GAN attacker \citep{Li.et.al.2018.ACLShort}, DP has the advantage of \emph{quantifying} and \emph{guaranteeing} how much privacy can be lost in the worst case. However, as \citet{Habernal.2021.EMNLP} showed, adapting DP mechanisms to NLP properly is a non-trivial task.

Representation learning with protecting privacy in an end-to-end fashion has been recently proposed in DPText  \citep{Beigi.et.al.2019.HT,Beigi.et.al.2019.arXiv,Alnasser.et.al.2021}. DPText consists of an auto-encoder for text representation, a differential-privacy-based noise adder, and private attribute discriminators, among others. The latent text representation is claimed to be differentially private and thus can be shared with data consumers for a given down-stream task. Unlike using a pre-determined privacy budget $\varepsilon$, DPText takes $\varepsilon$ as a learnable parameter and utilizes the reparametrization trick \cite{Kingma.Welling.2014} for random sampling. However, the downstream task results look too good to be true for such low $\varepsilon$ values. We thus asked whether DPText is really differentially private.

This paper makes two important contributions to the community. First, we formally analyze the heart of DPText and prove that the employed reparametrization trick based on inverse continuous density function in DPText is wrong and the model violates the DP guarantees. This shows that extreme care should be taken when implementing DP algorithms in end-to-end differentiable deep neural networks. Second, we propose an empirical sanity check which simulates the actual privacy loss on a carefully crafted dataset and a reconstruction attack. This supports our theoretical analysis of non-privacy of DPText and also confirms previous findings of breaking privacy of another system ADePT.\footnote{ADePT is a text-to-text rewriting system claimed to be differentially private \citep{Krishna.et.al.2021.EACL} but has been found to be DP-violating  \citep{Habernal.2021.EMNLP}.}

\section{Differential privacy primer}

Suppose we have a dataset (database) where each element belongs to an individual, for example Alice, Bob, Charlie, up to $m$. Each person's entry, denoted with a generic variable $x$, could be an arbitrary object, but for simplicity consider it a real valued vector $x \in \mathbb{R}^k$. An important premise is that this vector contains some sensitive information we aim to protect, for example an income ($x \in \mathbb{R}$), a binary value whether or not the person has a certain disease ($x \in  \{0.0, 1.0\})$, or a dense representation from SentenceBERT containing the person's latest medical record ($x \in \mathbb{R}^k$). This dataset is held by someone we trust to protect the information, the trusted curator.\footnote{This is \emph{centralized DP}, as opposed to \emph{local-DP} where no such trusted curator exists.}

This dataset is a set from which we can create $2^m$ subsets, for instance $X_1 = \{\textrm{Alice}\}$, $X_2 = \{\textrm{Alice}, \textrm{Bob}\}$, etc. All these subsets form a \emph{universe} $\mathcal{X}$, that is $X_1, X_2, \dots \in \mathcal{X}$, and each of them is also called (a bit ambiguously) a dataset.

\begin{definition}
Any two datasets $X, X' \in \mathcal{X}$ are called \emph{neighboring}, if they differ in one person.
\end{definition}

For example, $X = \{\textrm{Alice}\}, X' = \{\textrm{Bob}\}$ or $X = \{\textrm{Alice}, \textrm{Bob}\}, X' = \{\textrm{Bob}\}$ are neighboring, while $X = \{\textrm{Alice}\}, X' = \{\textrm{Alice, Bob, Charlie}\}$ are not.

\begin{definition}
\emph{Numeric query} is any function $f$ applied to a dataset $X$ and outputting a real-valued vector, formally $f : X \to \mathbb{R}^k$.
\end{definition}

For example, numeric queries might return an average income ($f \to \mathbb{R}$), number of persons in the database ($f \to \mathbb{R})$, or a textual summary of medical records of all persons in the database represented as a dense vector ($f \to \mathbb{R}^k$). The query is simply something we want to learn from the dataset. A query might be also an identity function that just `copies' the input, e.g., $f(X = \{(1, 0)\}) \to (1, 0)$ for a real-valued dataset $X = \{(1, 0)\}$.

An attacker who knows everything about Bob, Charlie, and others would be able to reveal Alice's private information by querying the dataset and combining it with what they know already. Differentially private algorithm (or mechanism) $\mathcal{M}(X; f)$  thus randomly modifies the query output in order to minimize and quantify such attacks.
\citet{Smith.Ullman.2021} formulate the principle of differential privacy as follows:
\emph{``No matter what they know ahead of time, an attacker seeing the output of a differentially private algorithm would draw (almost) the same conclusions about Alice
whether or not her data were used.''
}

Let a DP-mechanism $\mathcal{M}(X; f)$ have an arbitrary range $\mathcal{R}$ (a generalization of our case of numeric queries, for which we would have $\mathcal{R} = \mathbb{R}^k$). 
Differential privacy is then defined as

\begin{equation}
\label{eq:dp-bayesian-def}
\frac{\Pr(X | \mathcal{M}(X; f) = z)}{\Pr(X' | \mathcal{M}(X; f) = z)}
\leq
\exp (\varepsilon) \cdot \frac{\Pr(X)}{\Pr(X')}
\end{equation}

for all neighboring datasets $X, X'$ and all $z \in \mathcal{R}$, where $\Pr(X)$ and $\Pr(X')$ is our prior knowledge of $X$ and $X'$. In words, our posterior knowledge of $X$ or $X'$ after observing $z$ can only grow by factor $\exp(\varepsilon)$ \citep{Mironov.2017.CSF}, where $\varepsilon$ is a \emph{privacy budget} \citep{Dwork.Roth.2013}.\footnote{In this paper, we will use the basic form of DP, that is $(\varepsilon, 0)$-DP. There are various other (typically more `relaxed') variants of DP, such $(\varepsilon, \delta)$-DP, but they are not relevant to the current paper as DPText also claims $(\varepsilon, 0)$-DP.}

\section{Analysis of DPText}

In the heart of the model, DPText relies on the standard Laplace mechanism which takes a real-valued vector and perturbs each element by a random draw from the Laplace distribution.

Formally, let $\mathbf{z}$ be a real-valued $d$-dimensional vector. Then the Laplace mechanism outputs a vector $\tilde{\mathbf{z}}$ such that for each index $i = 1, \dots, d$
\begin{equation}
\tilde{z}_i = z_i + s_i
\end{equation}
where each $s_i$ is drawn independently from a Laplace distribution with zero mean and scale $b$ that is proportional to the $\ell_1$ sensitivity $\Delta$ and the privacy budget $\varepsilon$, namely

\begin{equation}
s_i \sim \mathrm{Lap}\left( \mu = 0; b = \frac{\Delta}{\varepsilon} \right)
\end{equation}

The Laplace mechanism satisfies differential privacy \citep{Dwork.Roth.2013}.

\subsection{Reparametrization trick and inverse CDF sampling}

DPText employs the variational autoencoder architecture in order to directly optimize the amount of noise added in the latent layer parametrized by $\varepsilon$. In other words, the scale of the Laplace distribution becomes a trainable parameter of the network. As directly sampling from a distribution is known to be problematic for end-to-end differentiable deep networks, DPText borrows the reparametrization trick from \citet{Kingma.Welling.2014}.

In a nutshell, the reparametrization trick decouples drawing a random sample from a desired distribution (such as Exponential, Laplace, or Gaussian) into two steps: First draw a value from another distribution (such as Uniform), and then transform it using a particular function, mainly the inverse continuous density function (CDF).

As a matter of fact, sampling using the inverse CDF is a well-known and widely used method \citep{Devroye.1986,Ross.2012}  and forms the backbone of probability distribution generators in many popular frameworks.

\subsection{Inverse CDF of Laplace distribution}
\label{sec:inverse-cdf}

The inverse cumulative distribution function of Laplace distribution $\mathrm{Lap}(\mu; b)$ is:

\begin{equation}
\label{eq:cdf.ver1}
F^{-1}(u) = \mu - b\,\mathrm{sgn}(u-0.5)\,\ln(1 - 2|u-0.5|)
\end{equation}

where $u \sim \mathrm{Uni}(0, 1)$ is drawn from a standard uniform distribution \citep[p.~210]{Sugiyama.2016}, \citep[p.~303]{Nahmias.Olsen.2015}.
An equivalent expression without the $\mathrm{sgn}$ and absolute functions is derived, e.g., by \citet[p.~166]{Li.et.al.2018.Healthcare} as
\begin{equation}
F^{-1}(u) =
\begin{cases}
b \ln (2 u) + \mu       & \quad \text{if } u < 0.5\\
\mu - b \ln (2(1 - u))  & \quad \text{if } u \geq 0.5
\end{cases}
\end{equation}
where again $u \sim \mathrm{Uni}(0, 1)$.\footnote{This implementation is used in \texttt{numpy}, see \url{https://github.com/numpy/numpy/blob/maintenance/1.21.x/numpy/random/src/distributions/distributions.c\#L469}}

An alternative sampling strategy, as shown, e.g., by \citet[p.~62]{Al-Shuhail.Al-Dossary.2020}, assumes that the random variable is drawn from a shifted, zero-centered uniform distribution
\begin{equation}
v \sim \mathrm{Uni}\left( -0.5, +0.5 \right)
\end{equation}
and transformed through the following function
\begin{equation}
\label{eq:cdf.ver2}
F^{-1}(v) = \mu - b\ \mathrm{sgn}(v) \ln (1 - 2 |v|)
\end{equation}

While both (\ref{eq:cdf.ver1}) and (\ref{eq:cdf.ver2}) generate samples from $\mathrm{Lap}(\mu; b)$, note the substantial difference between $u$ and $v$, since each is drawn from a different uniform distribution.

\subsection{Proofs of DPText violating DP}

According to Eq.~3 in \citep{Alnasser.et.al.2021}, Eq.~9 in \citep{Beigi.et.al.2019.arXiv} which is an extended version of \citep{Beigi.et.al.2019.HT}, in Eq.~14 in \citep{Beigi.et.al.2021.Patent}, and personal communication to confirm the formulas, the main claim of DPText is as follows (rephrased):

\begin{quotation}
DPText utilizes the Laplace mechanism, which is DP \citep{Dwork.Roth.2013}. It implements the mechanism as follows:
Sampling a value from standard uniform
\begin{equation}
\label{eq:uni0}
v \sim \mathrm{Uni}(0, 1)
\end{equation}
and transforming using
\begin{equation}
\label{eq:cdf-wrong}
F^{-1}(v) = \mu - b\ \mathrm{sgn}(v) \ln (1 - 2 |v|)
\end{equation}
is equivalent to sampling noise from $\mathrm{Lap}(b)$.
\end{quotation}
This claim is unfortunately false, as it mixes up both approaches introduced in Sec.~\ref{sec:inverse-cdf}.
As a consequence, the Laplace mechanism using such sampling is not DP, which we will first prove formally.

\begin{theorem}
\label{theorem:dptext.always.positive}
Sampling using inverse CDF as in DPText using (\ref{eq:uni0}) and (\ref{eq:cdf-wrong}) does not produce Laplace distribution.
\end{theorem}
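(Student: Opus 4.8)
The plan is to substitute the sampling range of $v$ directly into the transform (\ref{eq:cdf-wrong}) and show that the resulting random variable is incompatible with $\mathrm{Lap}(\mu; b)$ on two counts: it is undefined on half of its domain, and strictly one-sided on the other half. The whole argument is an elementary substitution followed by a case split on the sign of the logarithm's argument.

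First I would observe that since $v \sim \mathrm{Uni}(0, 1)$ as in (\ref{eq:uni0}), we have $v > 0$ with probability one, hence $\mathrm{sgn}(v) = 1$ and $|v| = v$. The transform (\ref{eq:cdf-wrong}) therefore collapses to the single branch
\begin{equation}
F^{-1}(v) = \mu - b \ln(1 - 2v).
\end{equation}
This already exposes the mismatch flagged in the preceding discussion: the $\mathrm{sgn}$ and $|\cdot|$ in (\ref{eq:cdf-wrong}) were designed for a zero-centered input $v \sim \mathrm{Uni}(-0.5, +0.5)$ as in (\ref{eq:cdf.ver2}), where they select between the two symmetric halves of the Laplace density; feeding in a strictly positive $v$ makes both operators inert and kills one of the two branches entirely.

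Next I would analyze the argument $1 - 2v$ of the logarithm, recalling that the scale $b > 0$. For $v \in (0, \tfrac{1}{2})$ we have $1 - 2v \in (0, 1)$, so $\ln(1 - 2v) < 0$ and thus $F^{-1}(v) > \mu$ strictly. For $v \in [\tfrac{1}{2}, 1)$ we have $1 - 2v \leq 0$, so the logarithm is undefined over the reals (in practice returning NaN, with the boundary $v = \tfrac{1}{2}$ mapping to $+\infty$). Since $v$ is uniform on $(0, 1)$, exactly half of the probability mass lands in the region where the transform fails to return a finite real value.

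Finally I would derive the contradiction with $\mathrm{Lap}(\mu; b)$, whose density is symmetric about $\mu$ with full support on $\mathbb{R}$ and, in particular, assigns positive probability to draws below $\mu$. The DPText sampler never produces a value below $\mu$ on the half of its domain where it is defined, and produces nothing valid on the other half, so either observation alone rules out the Laplace law. I do not expect a genuine obstacle here; the only point requiring care is phrasing the conclusion cleanly despite the transform being partly undefined, so I would anchor the contradiction on the well-defined half via strict one-sidedness ($F^{-1}(v) > \mu$ almost surely where defined), which refutes the Laplace distribution robustly regardless of how one treats the NaN region.
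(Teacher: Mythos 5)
Your proposal is correct and rests on the same two observations as the paper's proof: for $v \ge \tfrac{1}{2}$ the argument of the logarithm is nonpositive so the transform is undefined over the reals, and on $(0,\tfrac12)$ the output is confined to one side of $\mu$. The framing differs slightly: the paper argues that (\ref{eq:cdf-wrong}) fails the hypothesis of the inverse-CDF sampling theorem (it is not increasing on all of $[0,1]$, hence is not the inverse CDF of any distribution when fed $\mathrm{Uni}(0,1)$), whereas you directly characterize the support of the resulting random variable and contradict the full, symmetric support of $\mathrm{Lap}(\mu;b)$. Your route has the small advantage of explicitly proving the strict one-sidedness $F^{-1}(v) > \mu$ almost surely where defined --- a fact the paper only states informally after the proof (``this function samples only positive or \texttt{NaN} numbers'') yet then relies on in the proof of Theorem~\ref{theorem:proof.dptext}; anchoring the contradiction there, independently of how an implementation handles the undefined half, is the robust choice.
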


\begin{proof}
	We will rely on the standard proof of sampling from inverse CDF (see Appendix~\ref{sec:proof.cdf}). The essential step of that proof is that the CDF is increasing on the support of the uniform distribution, that is on $[0, 1]$. However, $F^{-1}$ as used in (\ref{eq:cdf-wrong}) is increasing only on interval $[0, 0.5]$. For $v \geq 0.5$, we get negative argument to $\ln$ which yields a complex function, whose real part is even decreasing. Therefore (\ref{eq:cdf-wrong}) is not CDF of any probability distribution, if used with $\text{Uni}(0, 1)$.
\end{proof}

As a consequence, the output $\ln(v \leq 0)$ arbitrarily depends on the particular implementation. In \texttt{numpy}, it is \texttt{NaN} with a warning only. Therefore this function samples only positive or \texttt{NaN} numbers. Since DPText sources are not publicly available, we can only assume that \texttt{NaN} numbers are either replaced by zero, or the sampling proceeds as long as the desired number of samples is reached (discarding \texttt{NaN}s). In either case, no negative values can be obtained.
See Fig.~\ref{fig:distributions} in the Appendix for various Laplace-based distributions sampled with different techniques including possible distributions sampled in DPText.

\begin{theorem}
\label{theorem:proof.dptext}
DPText with private mechanism based on (\ref{eq:uni0}) and (\ref{eq:cdf-wrong}) fails to guarantee differential privacy.
\end{theorem}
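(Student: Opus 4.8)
The plan is to argue by contradiction, leaning entirely on the structural fact already established in Theorem~\ref{theorem:dptext.always.positive}: the noise $s_i$ produced by sampling via (\ref{eq:uni0}) and (\ref{eq:cdf-wrong}) is never negative (it is strictly positive, or \texttt{NaN}, which any concrete implementation must collapse to a non-negative value such as $0$). Hence the DPText mechanism is a \emph{one-sided} additive-noise mechanism. I would then recall that the Bayesian formulation (\ref{eq:dp-bayesian-def}) is, by Bayes' rule, equivalent to the standard likelihood-ratio form of $\varepsilon$-DP: for all neighboring $X, X'$ and every output event $S \subseteq \mathcal{R}$ one needs $\Pr(\mathcal{M}(X;f) \in S) \leq \exp(\varepsilon)\,\Pr(\mathcal{M}(X';f) \in S)$. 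Proving non-privacy then reduces to exhibiting a single neighboring pair and a single output event whose probability ratio is infinite, so that no finite $\varepsilon$ can satisfy the inequality.

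Concretely, I would take the identity query $f$ (or any query with positive $\ell_1$ sensitivity $\Delta > 0$), so that there exist neighboring datasets whose query outputs $\mathbf{z}$ and $\mathbf{z}'$ differ; fix a coordinate $i$ with $z_i < z_i'$. Because the added noise satisfies $s_i \geq 0$, the perturbed output $\tilde{z}_i = z_i + s_i$ lies in $[z_i, \infty)$ when the input is $\mathbf{z}$, and in $[z_i', \infty)$ when the input is $\mathbf{z}'$. I then choose the output event $S = \{\tilde{\mathbf{z}} : \tilde{z}_i \in [z_i, z_i')\}$, i.e.\ the band lying strictly below $z_i'$ in coordinate $i$.

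The two probabilities of $S$ are then easy to compare. Under input $\mathbf{z}$ the event $S$ requires noise $s_i \in [0, z_i' - z_i)$, a non-empty neighborhood of $0$; since the sampled one-sided distribution places positive mass arbitrarily close to $0$ (as $v \to 0^+$ in (\ref{eq:cdf-wrong}) the value $-b\ln(1-2v) \to 0^+$, and if \texttt{NaN}s are mapped to $0$ there is even an atom at $0$), we obtain $\Pr(\mathcal{M}(\mathbf{z};f) \in S) > 0$. Under input $\mathbf{z}'$, however, $S$ would require $s_i = \tilde{z}_i - z_i' < 0$, which never occurs, so $\Pr(\mathcal{M}(\mathbf{z}';f) \in S) = 0$. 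The ratio is therefore infinite, exceeding $\exp(\varepsilon)$ for every finite $\varepsilon$, and the DP inequality fails.

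The main obstacle I anticipate is not the ratio argument itself but making the one-sidedness watertight across the unspecified implementation: since the DPText source is unavailable, the conclusion $s_i \geq 0$ must hold no matter how \texttt{NaN}s from $\ln$ of a negative argument are handled (dropped and resampled, clipped, or zeroed). Theorem~\ref{theorem:dptext.always.positive} already guarantees that no negative value is ever emitted, so every reasonable handling still yields a support of the form $[z_i, \infty)$, and the support-mismatch argument goes through uniformly; the only remaining care is confirming positive output probability near $s_i = 0$, which follows from the limiting behavior of (\ref{eq:cdf-wrong}) noted above.
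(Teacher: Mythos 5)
Your proposal is correct and follows essentially the same route as the paper's proof: exploit the one-sidedness of the noise established in Theorem~\ref{theorem:dptext.always.positive} to exhibit neighboring datasets and an output region with positive probability under one input but zero under the other, forcing an infinite privacy-loss ratio. The only difference is cosmetic --- you phrase the witness as an event $S = [z_i, z_i')$ rather than the paper's single point $z = 0.1$ with $X=0$, $X'=1$, which is if anything the measure-theoretically cleaner formulation for a continuous mechanism.
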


\begin{proof}
	We rely on the standard proof of the Laplace mechanism as shown, e.g, by \citet{Habernal.2021.EMNLP}. Let $X = 0$ and $X' = 1$ be two neighboring datasets, and the query $f$ being the identity query, such that it outputs simply the value of $X$. Let the DPText mechanism $\mathcal{M}(X; f)$ outputs a particular value $z$.
	
	In order to being differentially private, mechanism $\mathcal{M}(X; f)$ has to fulfill the following bound of the privacy loss:
	\begin{equation}
	\label{eq:eps0dp1}
	\abs{
		\frac{
			\Pr(\mathcal{M}(X) = z)
		}{
			\Pr(\mathcal{M}(X') = z)
		}
	}
	\leq
	\exp(\varepsilon)
	\end{equation}
	for all neighboring datasets $X, X' \in \mathcal{X}$ and all outputs $z \in \mathcal{R}$ from the range of $\mathcal{M}$, provided that our priors over $X$ and $X'$ are uniform (cf.~Eq.~\ref{eq:dp-bayesian-def}).
	
	Fix $z = 0.1$. Then $\Pr(\mathcal{M}(X) = 0.1)$ will have a positive probability (recall it takes the query output $f(X = 0) = 0$ and adds a random number drawn from the probability distribution, which is always positive as shown in Theorem~\ref{theorem:dptext.always.positive}.) However $\Pr(\mathcal{M}(X') = 0.1)$ will be zero, as the query output  $f(X' = 1) = 1$ will be added again only a positive random number and thus never be less then $1$. By plugging this into (\ref{eq:eps0dp1}), we obtain
	\begin{equation}
	\label{eq:eps0dp2}
	\abs{
		\frac{
			\Pr(\mathcal{M}(X) = 0.1)
		}{
			\Pr(\mathcal{M}(X') = 0.1)
		}
	}
	= \frac{\Pr > 0}{\Pr = 0}
	\nleq
	\exp(\varepsilon)
	\end{equation}
	which results in an infinity privacy loss and violates differential privacy.
\end{proof}

\begin{figure*}[h!]
	\begin{center}
		\includegraphics[height=12.9em]{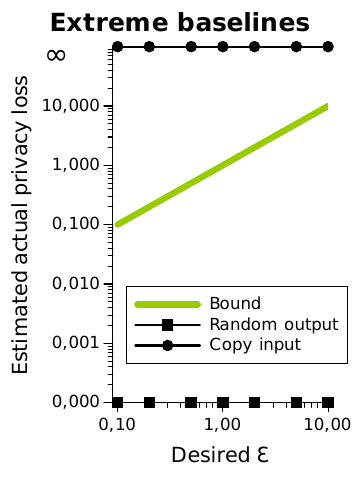}
		\includegraphics[height=12.9em]{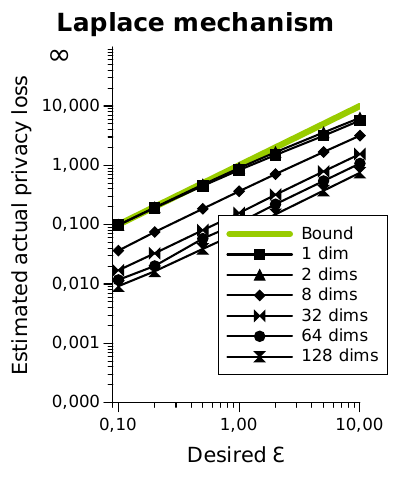}
		\includegraphics[height=12.9em]{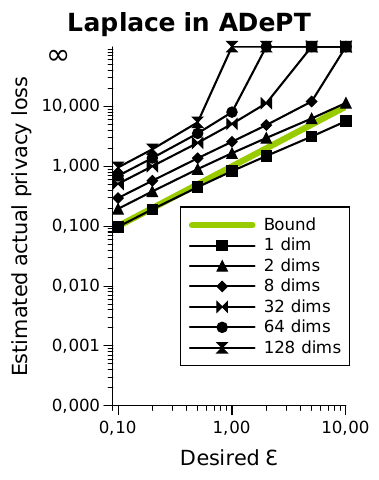}
		\includegraphics[height=12.9em]{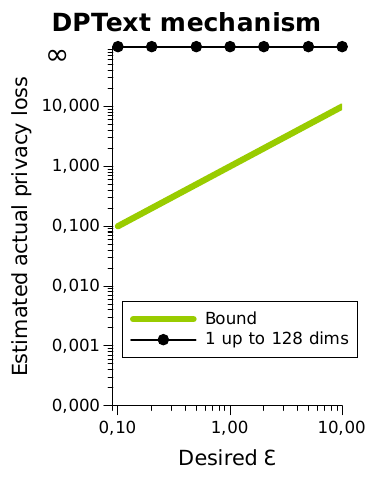}
	\end{center}
	\caption{\label{fig:results}Area under the green line: Our attack does not reveal more than allowed by the desired privacy budget. Note that it does not guarantee DP, the reconstruction attack might be just weak. Area above the green line: The algorithm almost certainly violates DP as our attack caused bigger privacy loss than allowed by $\varepsilon$. \emph{Extreme baselines} show two extreme scenarios, as \emph{random output} is absolutely private (but provides zero utility) and \emph{copy input} provides maximal utility but no privacy by revealing the data in full.}
\end{figure*}

\section{Empirical sanity check algorithm}

It is impossible to empirically verify that a given DP-mechanism implementation is actually DP \citep{Ding.et.al.2018.SIGSAC}. However, it is possible to detect a DP-violating mechanism with a fair degree of certainty. We propose a general sanity check applicable to any real-valued DP mechanism, such as the Laplace mechanism, DPText, or any other.\footnote{Some related works along these lines also utilize statistical analysis of the source code written in a C-like language \citep{Wang.et.al.2020.SIGSAC}.}

We start by constructing two neighboring datasets $X$ (Alice) and $X'$ (Bob) such that $X = (0, \dots, 0_n)$ consists of $n$ zeros and $X' = (1, \dots, 1_n)$ consists of $n$ ones. The dimensionality $n \in \{1, 2, \dots\}$ is a hyperparameter of the experiment. 
We employ a synthetic data release mechanism (also called local DP). The mechanism takes $X$ or $X'$ and outputs its privatized version of the same dimensionality $n$, so that the zeros or ones are `noisified' real numbers. The query sensitivity $\Delta$ is $n$.\footnote{See \citep{Dwork.Roth.2013} for $\ell_1$-sensitivity definition.}

Thanks to the post-processing lemma, any post-processing of DP output remains DP. We can thus turn the output real vector back to all zeros or all ones, simply by rounding to closest $0$ or $1$ and applying majority voting. This process is in fact our reconstruction attack: given a privatized vector, we try to guess what the original values were, either all zeros or all ones.

What our attacker is doing, and what DP protects, is that if Alice gives us her privatized data, we cannot tell whether her private values were all zeros or all ones (up to a given factor); the same for Bob.

By definition (\ref{eq:dp-bayesian-def}) and having no prior knowledge over $X$ and $X'$ apart from the fact that the values are correlated, our attacker cannot exceed the guaranteed privacy loss $\exp(\varepsilon)$:
\begin{equation}
\frac{\Pr(X | \mathcal{M}(X; f) = z)}{\Pr(X' | \mathcal{M}(X; f) = z)}
\leq
\exp (\varepsilon)
\end{equation}

We can estimate the conditional probability $\Pr(X | \mathcal{M}(X; f) = z)$ using maximum likelihood estimation (MLE) simply as our attacker's precision: How many times the attacker reconstructed true $X$ values given the observed privatized vector. We can do the same for estimating the conditional probability of $X'$.
In particular, we repeatedly run each DP mechanism over $X$ and $X'$ 10 million times each, which gives very precise MLE estimates even for small $\varepsilon$.\footnote{For example, we repeated the full experiment on ADePT ($n = 2$, $\varepsilon = 0.1$) 100 times which results in standard deviation $0.0008$ from the mean value $0.195$. Better MLE precision can be simply obtained by increasing the 10 million repeats per experiment. Source codes available at \url{https://github.com/trusthlt/acl2022-reparametrization-trick-broke-differential-privacy}}

\section{Results and discussion}

For the sake of completeness, we implemented two extreme baselines: One that simply copies input (no privacy) and other one completely random regardless of the input (maximum privacy); these are shown in Figure~\ref{fig:results} left. The vanilla Laplace mechanism behaves as expected; all empirical losses for all dimensions (1 up to 128) are bounded by $\varepsilon$. We re-implemented the Laplace mechanism from ADePT \citep{Krishna.et.al.2021.EACL} that, due to wrong sensitivity, has been shown theoretically as DP-violating \citep{Habernal.2021.EMNLP}. We empirically confirm that ADePT suffered from the curse of dimensionality as the privacy loss explodes for larger dimensions. The last panel confirms our previous theoretical DPText results, which (regardless of dimensionality) has infinite privacy loss.

Note that we constructed the dataset carefully as two neighboring multidimensional correlated data that are as distant from each other as possible in the $(0,1)^n$ space. However, DP must guarantee privacy for any datapoints, even the worst case scenario, as shown by the correct Laplace mechanism.

\section{Conclusion}

We formally proved that DPText \citep{Beigi.et.al.2019.HT,Beigi.et.al.2019.arXiv,Alnasser.et.al.2021,Beigi.et.al.2021.Patent} is not differentially private due to wrong sampling in its reparametrization trick. We also proposed an empirical sanity check that confirmed our findings and can help to reveal potential errors in DP mechanism implementations for NLP.

\section{Ethics Statement}

We declare no conflict of interests with the authors of DPText, we do not even know them personally. The purpose of this paper is strictly scientific.

\section*{Acknowledgements}

The independent research group TrustHLT is supported by the Hessian Ministry of Higher Education, Research, Science and the Arts.
Thanks to Cecilia Liu, Haau-Sing Li, and the anonymous reviewers for their helpful feedback.
A special thanks to Condor airlines, whose greed to make passengers pay for everything resulted in the most productive transatlantic flights I've ever had.

\bibliography{bibliography}

\begin{thebibliography}{23}
\expandafter\ifx\csname natexlab\endcsname\relax\def\natexlab#1{#1}\fi

\bibitem[{Al-Shuhail and Al-Dossary(2020)}]{Al-Shuhail.Al-Dossary.2020}
Abdullatif Al-Shuhail and Saleh Al-Dossary. 2020.
\newblock \href {https://doi.org/10.1007/978-3-030-32948-8_4} {\emph{Robust
  Filter---Dealing with Impulse Noise}}, pages 61--80. Springer International
  Publishing.

\bibitem[{Alnasser et~al.(2021)Alnasser, Beigi, and Liu}]{Alnasser.et.al.2021}
Walaa Alnasser, Ghazaleh Beigi, and Huan Liu. 2021.
\newblock \href {https://doi.org/10.1007/978-3-030-80387-2_9} {{Privacy
  Preserving Text Representation Learning Using BERT}}.
\newblock In \emph{Proceedings of the 14th International Conference on Social,
  Cultural, and Behavioral Modeling (SBP-BRiMS)}, pages 91--100, Virtual event.
  Springer International Publishing.

\bibitem[{Angus(1994)}]{Angus.1994}
John~E. Angus. 1994.
\newblock \href {https://doi.org/10.1137/1036146} {{The Probability Integral
  Transform and Related Results}}.
\newblock \emph{SIAM Review}, 36(4):652--654.

\bibitem[{Beigi et~al.(2019{\natexlab{a}})Beigi, Shu, Guo, Wang, and
  Liu}]{Beigi.et.al.2019.arXiv}
Ghazaleh Beigi, Kai Shu, Ruocheng Guo, Suhang Wang, and Huan Liu.
  2019{\natexlab{a}}.
\newblock \href {http://arxiv.org/abs/1907.03189} {{I Am Not What I Write:
  Privacy Preserving Text Representation Learning}}.
\newblock \emph{arXiv preprint}.

\bibitem[{Beigi et~al.(2019{\natexlab{b}})Beigi, Shu, Guo, Wang, and
  Liu}]{Beigi.et.al.2019.HT}
Ghazaleh Beigi, Kai Shu, Ruocheng Guo, Suhang Wang, and Huan Liu.
  2019{\natexlab{b}}.
\newblock \href {https://doi.org/10.1145/3342220.3344925} {{Privacy Preserving
  Text Representation Learning}}.
\newblock In \emph{Proceedings of the 30th ACM Conference on Hypertext and
  Social Media}, pages 275--276, Hof, Germany. ACM.

\bibitem[{Beigi et~al.(2021)Beigi, Shu, Guo, Wang, and
  Liu}]{Beigi.et.al.2021.Patent}
Ghazaleh Beigi, Kai Shu, Ruocheng Guo, Suhang Wang, and Huan Liu. 2021.
\newblock \href {https://patents.google.com/patent/US20210342546A1/} {{Systems
  and methods for a privacy preserving text representation learning
  framework}}.
\newblock U.S. Patent, Pending Application US20210342546A1, Application filed
  by Arizona Board of Regents of ASU.

\bibitem[{Devroye(1986)}]{Devroye.1986}
Luc Devroye. 1986.
\newblock \emph{Non-uniform random variate generation}.
\newblock Springer-Verlag New York Inc.

\bibitem[{Ding et~al.(2018)Ding, Wang, Wang, Zhang, and
  Kifer}]{Ding.et.al.2018.SIGSAC}
Zeyu Ding, Yuxin Wang, Guanhong Wang, Danfeng Zhang, and Daniel Kifer. 2018.
\newblock \href {https://doi.org/10.1145/3243734.3243818} {{Detecting
  Violations of Differential Privacy}}.
\newblock In \emph{Proceedings of the 2018 ACM SIGSAC Conference on Computer
  and Communications Security}, pages 475--489, Toronto, Canada. ACM.

\bibitem[{Dwork and Roth(2013)}]{Dwork.Roth.2013}
Cynthia Dwork and Aaron Roth. 2013.
\newblock \href {https://doi.org/10.1561/0400000042} {{The Algorithmic
  Foundations of Differential Privacy}}.
\newblock \emph{Foundations and Trends{\textregistered} in Theoretical Computer
  Science}, 9(3-4):211--407.

\bibitem[{Habernal(2021)}]{Habernal.2021.EMNLP}
Ivan Habernal. 2021.
\newblock \href {https://aclanthology.org/2021.emnlp-main.114/} {{When
  differential privacy meets NLP: The devil is in the detail}}.
\newblock In \emph{Proceedings of the 2021 Conference on Empirical Methods in
  Natural Language Processing}, pages 1522--1528, Punta Cana, Dominican
  Republic. Association for Computational Linguistics.

\bibitem[{Igamberdiev and Habernal(2021)}]{Igamberdiev.Habernal.2021.arXiv}
Timour Igamberdiev and Ivan Habernal. 2021.
\newblock \href {http://arxiv.org/abs/2102.09604} {{Privacy-Preserving Graph
  Convolutional Networks for Text Classification}}.
\newblock \emph{arXiv preprint}.

\bibitem[{Kingma and Welling(2014)}]{Kingma.Welling.2014}
Diederik~P. Kingma and Max Welling. 2014.
\newblock \href {http://arxiv.org/abs/1312.6114} {{Auto-Encoding Variational
  Bayes}}.
\newblock In \emph{Proceedings of the 2nd International Conference on Learning
  Representations (ICLR)}, pages 1--14, Banff, Canada.

\bibitem[{Krishna et~al.(2021)Krishna, Gupta, and
  Dupuy}]{Krishna.et.al.2021.EACL}
Satyapriya Krishna, Rahul Gupta, and Christophe Dupuy. 2021.
\newblock \href {https://www.aclweb.org/anthology/2021.eacl-main.207} {{ADePT:
  Auto-encoder based Differentially Private Text Transformation}}.
\newblock In \emph{Proceedings of the 16th Conference of the European Chapter
  of the Association for Computational Linguistics: Main Volume}, pages
  2435--2439, Online. Association for Computational Linguistics.

\bibitem[{Li et~al.(2019)Li, Zhao, Yu, Wang, and
  Liu}]{Li.et.al.2018.Healthcare}
Xianxian Li, Huaxing Zhao, Dongran Yu, Li-e Wang, and Peng Liu. 2019.
\newblock \href {https://doi.org/10.1007/978-981-13-6837-0_12}
  {{Multidimensional Correlation Hierarchical Differential Privacy for Medical
  Data with Multiple Privacy Requirements}}.
\newblock In \emph{{Proceedings of the 2nd International Conference on
  Healthcare Science and Engineering}}, pages 153--173, Guilin, China. Springer
  Singapore.

\bibitem[{Li et~al.(2018)Li, Baldwin, and Cohn}]{Li.et.al.2018.ACLShort}
Yitong Li, Timothy Baldwin, and Trevor Cohn. 2018.
\newblock \href {https://doi.org/10.18653/v1/P18-2005} {{Towards Robust and
  Privacy-preserving Text Representations}}.
\newblock In \emph{Proceedings of the 56th Annual Meeting of the Association
  for Computational Linguistics (Volume 2: Short Papers)}, pages 25--30,
  Melbourne, Australia. Association for Computational Linguistics.

\bibitem[{Lison et~al.(2021)Lison, Pil{\'{a}}n, Sanchez, Batet, and
  {\O}vrelid}]{Lison.et.al.2021.ACL}
Pierre Lison, Ildik{\'{o}} Pil{\'{a}}n, David Sanchez, Montserrat Batet, and
  Lilja {\O}vrelid. 2021.
\newblock \href {https://doi.org/10.18653/v1/2021.acl-long.323} {{Anonymisation
  Models for Text Data: State of the art, Challenges and Future Directions}}.
\newblock In \emph{Proceedings of the 59th Annual Meeting of the Association
  for Computational Linguistics and the 11th International Joint Conference on
  Natural Language Processing (Volume 1: Long Papers)}, pages 4188--4203,
  Online. Association for Computational Linguistics.

\bibitem[{Mironov(2017)}]{Mironov.2017.CSF}
Ilya Mironov. 2017.
\newblock \href {https://doi.org/10.1109/CSF.2017.11} {{R{\'{e}}nyi
  Differential Privacy}}.
\newblock In \emph{2017 IEEE 30th Computer Security Foundations Symposium
  (CSF)}, pages 263--275, Santa Barbara, piiCA, USA. IEEE.

\bibitem[{Nahmias and Olsen(2015)}]{Nahmias.Olsen.2015}
Steven Nahmias and Tava~Lennon Olsen. 2015.
\newblock \emph{Production and Operations Analysis}, 7th edition.
\newblock Waveland Press, Inc.

\bibitem[{Ross(2012)}]{Ross.2012}
Sheldon Ross. 2012.
\newblock \emph{Simulation}, 5th edition.
\newblock Academic Press.

\bibitem[{Senge et~al.(2021)Senge, Igamberdiev, and
  Habernal}]{Senge.et.al.2021.arXiv}
Manuel Senge, Timour Igamberdiev, and Ivan Habernal. 2021.
\newblock \href {http://arxiv.org/abs/2112.08159} {{One size does not fit all:
  Investigating strategies for differentially-private learning across NLP
  tasks}}.
\newblock \emph{arXiv preprint}.

\bibitem[{Smith and Ullman(2021)}]{Smith.Ullman.2021}
Adam Smith and Jonathan Ullman. 2021.
\newblock \href {https://dpcourse.github.io} {{Privacy in Statistics and
  Machine Learning. Lecture 5: Differential Privacy II}}.

\bibitem[{Sugiyama(2016)}]{Sugiyama.2016}
Masashi Sugiyama. 2016.
\newblock \emph{{Introduction to Statistical Machine Learning}}.
\newblock Morgan Kaufmann.

\bibitem[{Wang et~al.(2020)Wang, Ding, Kifer, and
  Zhang}]{Wang.et.al.2020.SIGSAC}
Yuxin Wang, Zeyu Ding, Daniel Kifer, and Danfeng Zhang. 2020.
\newblock \href {https://doi.org/10.1145/3372297.3417282} {{CheckDP: An
  Automated and Integrated Approach for Proving Differential Privacy or Finding
  Precise Counterexamples}}.
\newblock In \emph{Proceedings of the 2020 ACM SIGSAC Conference on Computer
  and Communications Security}, pages 919--938, Online. ACM.

\end{thebibliography}
\bibliographystyle{acl_natbib}

\appendix

\begin{figure*}[t!]
	\includegraphics[width=\linewidth]{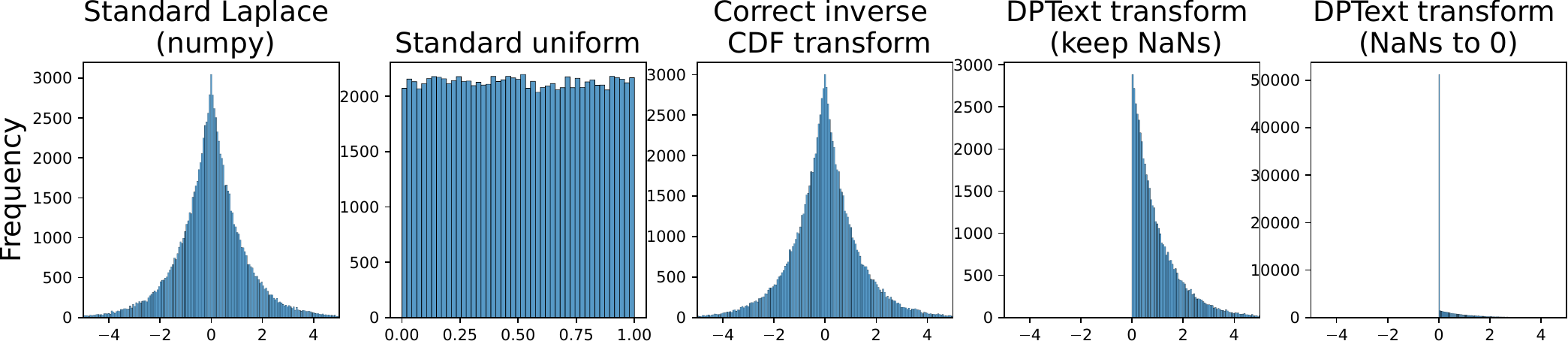}
	\caption{\label{fig:distributions} Comparing sampling strategies. Left: Sampling using vanilla \texttt{numpy} implementation. Second from the left: Uniform sample as basis for the following three inverse CDF transformations. Generated with 100k samples.}
\end{figure*}

\section{Proof of sampling from inverse CDF}
\label{sec:proof.cdf}

Important fact 1: A random variable $U$ is uniformly distributed on $[0, 1]$ if the following holds
\begin{equation}
\label{eq:uniform.density.lemma}
U \sim \mathrm{Uni}(0, 1) \iff \Pr(U \leq u) = u.
\end{equation}
Important fact 2: For any function $g(\cdot)$ with an inverse function $g^{-1}(\cdot)$, the following holds
\begin{equation}
\label{eq:inverse.inverse.identity}
g(g^{-1}(x)) = x; \quad g^{-1}(g(x)) = x.
\end{equation}
Important fact 3: For any increasing function $g(\cdot)$, we have by definition
\begin{equation}
\label{eq:increasing.equiv}
x \leq y \implies g(x) \leq g(y).
\end{equation}
We know that $\Pr(X \leq a)$ is a shortcut for probability of event $E_1$ defined using the set-builder notation as $E_1 = \{ s \in \Omega : X(s) \leq a \}$. Then by plugging (\ref{eq:increasing.equiv}) into the predicate of $E_1$, we obtain an equal set, namely event $E_2 = \{s \in \Omega : g(X(s)) \leq g(a)\}$, for which the probability must be the same. Therefore for any random variable $X$ and increasing function $g(\cdot)$ we have
\begin{equation}
\label{eq:same.probs}
\Pr(X \leq a) = \Pr(g(X) \leq g(a)).
\end{equation}

\begin{theorem}
\label{theorem:cdf}
Let $U$ be a uniform random variable on $[0, 1]$. Let $X$ be a continuous random variable with CDF (cumulative distribution function) $F(\cdot)$. Let $Y$ be defined such that $Y = F^{-1}(U)$. Then $Y$ has CDF $F(\cdot)$.
\end{theorem}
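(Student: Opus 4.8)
The plan is to compute the CDF of $Y$ directly from its definition and verify that it coincides with $F$. Concretely, I would fix an arbitrary real $y$ and evaluate $\Pr(Y \le y) = \Pr(F^{-1}(U) \le y)$, with the goal of reducing this probability, through a short chain of equalities, to something I can read off from the three ``important facts'' already established above.

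The key move is to apply the CDF $F$ to both sides of the inequality inside the probability. Since $F$, being the CDF of a continuous random variable, is increasing, its probabilistic consequence (\ref{eq:same.probs}) lets me rewrite $\Pr(F^{-1}(U) \le y)$ as $\Pr(F(F^{-1}(U)) \le F(y))$. By the inverse-function identity (\ref{eq:inverse.inverse.identity}) we have $F(F^{-1}(U)) = U$, so the event collapses to $\{U \le F(y)\}$ and I am left with $\Pr(U \le F(y))$. Finally, because $F(y)$ is a value of a CDF it lies in $[0,1]$, so the uniform-density characterization (\ref{eq:uniform.density.lemma}) applies and yields $\Pr(U \le F(y)) = F(y)$. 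Chaining these equalities gives $\Pr(Y \le y) = F(y)$ for every $y$, which is exactly the claim that $Y$ has CDF $F$.

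I expect the only delicate point to be the very step of applying $F$ to both sides of the inequality: this silently relies on $F$ being genuinely increasing and invertible on the support of $X$, which is precisely the hypothesis (a continuous random variable equipped with a well-defined inverse $F^{-1}$) built into the statement. Everything else is a routine substitution from the stated facts. It is worth noting that this same hypothesis is exactly what fails for (\ref{eq:cdf-wrong}) when $v \ge 0.5$, which is why the present theorem establishes the standard result while the DPText sampling scheme does not inherit it.
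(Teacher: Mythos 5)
Your proposal is correct and follows essentially the same route as the paper's own proof: fix $y$, apply the increasing function $F$ inside the probability via (\ref{eq:same.probs}), collapse $F(F^{-1}(U))$ to $U$ via (\ref{eq:inverse.inverse.identity}), and finish with the uniform characterization (\ref{eq:uniform.density.lemma}). Your closing remark correctly identifies the monotonicity-on-$[0,1]$ hypothesis as the exact point exploited in Theorem~\ref{theorem:dptext.always.positive}, which is also how the paper connects the two results.
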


\begin{proof}

Function $F(\cdot)$ is the CDF of a continuous random variable $X$, and as a CDF its range is $[0, 1]$. Also, if $F(\cdot)$ is strictly increasing, it has a unique inverse function $F^{-1}(\cdot)$ defined on $[0, 1]$.

We defined $Y = F^{-1}(U)$, so consider
\begin{equation}
\Pr(Y \leq y) = \Pr(F^{-1}(U) \leq y).
\end{equation}
\label{eq:plugging.same.probs}
Since $F(\cdot)$ is increasing, using (\ref{eq:same.probs}) we get
\begin{equation}
\Pr(Y \leq y) = \Pr(F(F^{-1}(U)) \leq F(y)).
\end{equation}
Now plugging (\ref{eq:inverse.inverse.identity}) we obtain
\begin{equation}
\Pr(Y \leq y) = \Pr(U \leq F(y)),
\end{equation}
and finally by (\ref{eq:uniform.density.lemma})
\begin{equation}
\Pr(Y \leq y) = F(y).
\end{equation}

\end{proof}
For an overview of proofs of Theorem~\ref{theorem:cdf} see \citep{Angus.1994}.

\end{document}